\def\BibTeX{{\rm B\kern-.05em{\sc i\kern-.025em b}\kern-.08em
    T\kern-.1667em\lower.7ex\hbox{E}\kern-.125emX}}
\newcommand{\fpname}{floating point}
\newcommand{\fpexname}{IEEE 754-1985}
\newcommand{\tkname}{two's complement}
\newcommand{\mdname}{FLInt}
\newcommand{\mdnames}{FLInts}
\newcommand{\kuanmdname}{cache-aware grouping and swapping}
\newcommand{\kuanmdnameabbr}{CAGS}
\newtheorem{lemma}{Lemma}
\newtheorem{theorem}{Theorem}
\newtheorem{corollary}{Corollary}
\newtheorem{definition}{Definition}
\begin{document}

\title{\textbf{FLInt:} Exploiting \underline{Fl}oating Point Enabled \underline{Int}eger Arithmetic for Efficient Random Forest Inference \vspace{-.5cm}}

\author{Christian Hakert (TU Dortmund University), Kuan-Hsun Chen (University of Twente), \\Jian-Jia Chen (TU Dortmund University)\vspace{-.5cm}}

\maketitle
\thispagestyle{plain}

\begin{abstract}
In many machine learning applications, e.g., tree-based ensembles, floating point numbers are extensively utilized due to their expressiveness.  
Nowadays performing data analysis on embedded devices from dynamic data masses becomes available, but such systems often lack hardware capabilities to process floating point numbers, introducing large overheads for their processing.
Even if such hardware is present in general computing systems,
using integer operations instead of floating point operations promises to reduce operation overheads and improve the performance.

In this paper, we provide \mdname, a full precision floating point comparison for random forests, by only using integer and logic operations. To ensure the same functionality preserves, we formally prove the correctness of this comparison. 
Since random forests only require comparison of floating point numbers during inference, we implement \mdname~in low level realizations and therefore eliminate the need for floating point hardware entirely, by keeping the model accuracy unchanged. 
The usage of \mdname~basically boils down to a one-by-one replacement of conditions: For instance, a comparison statement in C:
\texttt{if(pX[3]<=(float)10.074347)} becomes
\texttt{if((*(((int*)(pX))+3))<=((int)(0x41213087)))}. Experimental evaluation on X86 and ARMv8 desktop and server class systems shows that the execution time can be reduced by up to $\approx 30\%$ with our novel approach.
\end{abstract}

\begin{IEEEkeywords}
floating point, random forest, decision tree
\end{IEEEkeywords}

\section{Introduction}

Random forests are famous machine learning models, especially for scenarios under resource limitations. While training of the models usually can be done on powerful hardware,
inference has to be highly resource efficient in order to exploit maximal performance on the execution platform.
Although random forests in their structure can be made highly resource efficient, the processed data is given and hardly can be optimized. If it is required to use floating point numbers for the classification, systems need to be equipped with a hardware floating point unit or consume more energy and time for the use of software floating points \cite{cowlishaw2003decimal}. For small embedded systems, floating point units are commonly not integrated into the hardware. Even if a floating point unit is present, it may be desired to not activate it in order to save energy. Even out of the scope of embedded systems, the usage of floating point units usually introduces a certain overhead, at least in terms of execution time. One trivial approach would be to round all floating point numbers to integers, which potentially induces a loss in accuracy.
A certain floating point arithmetic is still required if incoming data still is encoded as floating points.

In this work, we provide a new alternative. The studied \textbf{problem} is to \textit{compute correct floating point arithmetic in random forests without the need for hardware floating point support}. By only using standard integer and logic operations, we 1) enable floating-point-based random forests on devices without floating point hardware and 2) eliminate the overheads to use the floating point unit. We answer the question: \textit{How floating point comparisons can be correctly computed based on integer and logic operations?} We \textbf{solve} this problem by specifically investigating the binary floating point format \cite{30711} and consider the binary ordering in relation to \tkname~signed integer interpretation \cite{10.5555/3153875}. While we formally prove that positive floating point numbers are order preserving,
the handling of negative numbers and a few special cases requires dedicated handling. Such a handling is integrated into a single operator, which we call \textbf{\mdname}. 
To the best of our knowledge, this work is the first to formally prove the correctness of an integer and logic based floating point operator and integrate it into the implementation of random forests.

We further consider an efficient implementation of the \mdname~operator in existing random forest implementations \cite{chen2022efficient} by processing the aforementioned handling during the implementation time to minimize the introduced overhead.
In order to evaluate the effectiveness of \mdname, we conduct experimental evaluation on the implementation of if-else tree based random forest ensembles. The results show that we can reduce the execution time by up to $\approx 30\%$ with the use of \mdname~on desktop and server systems.

\noindent\textbf{Our novel contributions:}
\begin{itemize}
    \item \mdname: A \tkname~and logic operation based comparison operator for floating point numbers, where we formally prove the correctness (see Section~\ref{sec:flint}).
    \item An efficient implementation of \mdname~in random forests with if-else tree implementations, where we resolve special case handling offline during the implementation time (see Section~\ref{sec_dt}).
    \item Experimental evaluation on X86 and ARMv8 server and desktop class systems to study the reduction of overall execution time when using \mdname~instead of floating points (see Section~\ref{sec_eval}).
\end{itemize}

\section{Related Work}
\label{sec_relwork}
Considering floating point arithmetic and the relation to \tkname~integer arithmetic, several starting points can be found in the literature. It is reported that some CPUs internally use the same hardware unit for floating point and integer comparisons with a few additions for the floating point computation \cite{bramley}. Furthermore, there are explicit considerations about the binary floating point format regarding accuracy and efficient programming \cite{demmel1995correctness, blinn1997floating}. However, comparing floating point numbers from the application level with the help of integer operations and providing guarantees for the correctness, to the best of our knowledge, is not considered. Furthermore, the integration of such a method into random forest execution is also not studied yet.

Efficient execution of random forests and its basic unit, decision trees, has been widely studied in the literature. The relevant techniques can be informally divided into two families of approaches: \textit{algorithmic refinements} and \textit{architectural optimizations}. The former targets to improve the execution algorithm of decision trees itself, e.g., by applying different representations. Kim et al.~propose parallelization in the form of vectorization for decision trees to favor the usage of Intel CPUs~\cite{kim/etal/2010}. Based on vectorization, QuickScorer performs a interleaved traversal of the trees by using logical bitwise operations for gradient boosted trees~\cite{Lucchese2016}, which is further optimized for batch-processing~\cite{10.1145/2911451.2914758}. Hummingbird is proposed to compile decision trees into a small set of tensor operations and batch tensors for each tree together for tree traversal~\cite{10.5555/3488766.3488817}.

On the other hand, architectural optimizations focus on the utilization of hardware resources.
Asadi~et~al. first introduce the concept of native trees and if-else trees as a low-level implementation \cite{Asadi/etal/2014}. These implementations are later picked up by Buschj\"ager et~al.~\cite{8594826}, where empirical probabilities for single branches within a decision tree are collected on the training data set and used to layout the memory representation of decision trees in order to benefit cache prefetching and protect from preemption. Chen~et~al. in addition utilize the GNU binary utilities to derive exact binary sizes to further optimize the memory layout  \cite{chen2022efficient}. These approaches, however, do not consider the handling of datatypes, which are defined by the training step. Our work can be classified into this category, where we develop a new operator to compare floating point numbers with only integer and logic operations. In addition, we show that the integration of this operator to existing optimization is applicable, bringing up further improvement.

\section{Providing Correct Floating Point Comparisons with Integer and Logic Arithmetic}
\label{sec:flint}
Floating point arithmetic includes several operations, which are required to process floating point numbers. In this paper, we focus on the comparison only (i.e.~$\geq$), since this is the only operation needed during random forest inference. To eliminate the use of hardware floating point support or software float implementations, we realize a comparison operation by only using signed integer arithmetic and logic operations. In this section, we present the binary layout of floating point numbers and \tkname~numbers, and construct the comparison operator between them. We formally prove every step and conclude the correctness of our final operator.

\subsection{Binary Floating Point and Signed Integer Format}

In order to illustrate the relation between the binary representation of floating point numbers and signed integer numbers, we lay out the state-of-the art formats in the following. Almost all computer systems nowadays use \tkname~\cite{10.5555/3153875} for signed integer numbers and \fpexname~\cite{30711} for single or double precision floating point numbers. Both formats support, among others, 32 bit and 64 bit types. 
For the rest of this section, we define the \fpname~and \tkname~format independent of the total bit length. 32 and 64 bit numbers in \tkname~and in \fpexname~then can be interpreted as an instance of the defined format. Later in this paper (i.e., Section~\ref{sec_dt}), 
we discuss real implementation on common hardware, where we use 32 and 64 bit \tkname~and \fpexname~numbers.

Both formats, \tkname~and \fpname, define an interpretation of a fixed length bit vector. Thus, an arbitrary bit vector can be either interpreted, among other options, as a signed integer, an unsigned integer or a floating point. Furthermore, the binary representation of a floating point number can be interpreted as a signed integer and vice versa.
\begin{definition}
Let $B\in\{0,1\}^{k}$ be a $k$ bit wide vector, then these bits can be interpreted either as a floating point or as signed integer number in \tkname. We denote $FP: \{0,1\}^{k} \to \mathbb{Q}$ as the floating point interpretation $FP(B)$ of $B$, $SI: \{0,1\}^{k} \to \mathbb{Z}$ as the signed integer (\tkname) interpretation $SI(B)$ of $B$ and $UI: \{0,1\}^{k} \to \mathbb{N}$ as the unsigned integer interpretation $UI(B)$ of $B$.
\end{definition}

For the interpretation of signed and unsigned integers, every bit within the bit vector is assigned a fixed value ($2^i$). If the bit is set to 1, the value of the position is counted, otherwise it is ignored. Negative numbers for the signed \tkname~format always have the most significant bit (MSB) set to $1$. The signed value is then interpreted by assigning a negative value to the MSB and interpreting the other bits similar to the unsigned format.
\begin{definition}
\label{tkdef}
The \tkname~interpretation (signed integer) of a bit vector $B=(b_{k-1},...,b_0)$ is defined as
\begin{equation}
    SI(B)=
    \left\{\begin{array}{cc}
         \sum\limits_{i=0}^{k-1} b_i\cdot 2^i& b_{k-1}=0  \\
         -2^{k-1}+\sum\limits_{i=0}^{k-2} b_i\cdot 2^i& b_{k-1}=1 
    \end{array}\right.
    \label{tksieq}
\end{equation}where the unsigned integer interpretation is the same as the \tkname~for positive numbers:
\begin{equation}
    UI(B)=
         \sum\limits_{i=0}^{k-1} b_i\cdot 2^i
\end{equation}
\end{definition}
One key advantage of this format is that the binary ordering (i.e.~the interpretation as an unsigned integer) is the same for negative and positive numbers. In addition, the border between positive and negative numbers allows for unchanged arithmetic, when the overflow bit is ignored. In detail, the representation of $-1$ in \tkname~is $(1,1,1,...)$ and the representation of $0$ is $(0,0,0,...)$. When adding $+1$ to the representation of $-1$ in unsigned arithmetic, all bits switch to $0$ and an overflowing $1$ goes to position $k$. Since the overflow can be safely ignored (which is intended in this case \cite{10.5555/3153875}), the computation is correct.

The \fpname~format

differs from the binary representation of integers. In this format, the binary representation is interpreted as three components: 1) a sign bit at the position of the most significant bit ($k-1$), 2) a biased exponent of $j$ bit length where the bias is $2^{j-1}-1$, and 3) a mantissa, filling the remaining bits, which is interpreted with an implicit $1$.
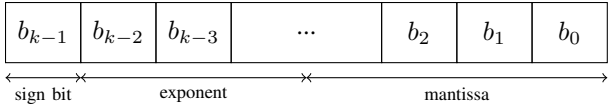
\begin{figure}
    \centering
    \begin{tikzpicture}
    \node[draw, minimum width=1cm, minimum height=.8cm] () at (0,0) {$b_{k-1}$};
    \node[draw, minimum width=1cm, minimum height=.8cm] () at (1,0) {$b_{k-2}$};
    \node[draw, minimum width=1cm, minimum height=.8cm] () at (2,0) {$b_{k-3}$};
    
    \node[draw, minimum width=2cm, minimum height=.8cm] () at (3.5,0) {...};
    
    \node[draw, minimum width=1cm, minimum height=.8cm] () at (5,0) {$b_{2}$};
    \node[draw, minimum width=1cm, minimum height=.8cm] () at (6,0) {$b_{1}$};
    \node[draw, minimum width=1cm, minimum height=.8cm] () at (7,0) {$b_{0}$};
    
    \draw[<->] (-.5,-.6) -- node[below] {\scriptsize sign bit} (.5,-.6);
    \draw[<->] (.5,-.6) -- node[below] {\scriptsize exponent} (3.5,-.6);
    \draw[<->] (3.5,-.6) -- node[below] {\scriptsize mantissa} (7.5,-.6);
    \end{tikzpicture}
    \vspace{-.4cm}
    \caption{Illustration of the binary \fpname~representation}
    \vspace{-.5cm}
    \label{fig:fpformat}
\end{figure}
\Cref{fig:fpformat} illustrates the layout of these three components within the bit vector. The interpretation of the \fpname~format differs in large parts from the interpretation of the \tkname. While the sign bit is interpreted as factor ($\times -1$ or $\times 1$), the mantissa is interpreted as a decimal number between $1$ and $2$, which is scaled by the exponent.
\begin{definition}
\label{fpdef}
The \fpname~interpretation of a bit vector $B=(b_{k-1},...,b_0)=(s, e_{j-1}, ..., e_0, m_{x-1}, ..., m_0)$ for $j$ bit exponent and $x$ bit mantissa is defined as
\begin{equation}
\begin{split}
    FP(B)=(-1)^{s}\cdot 2^{UI(e_{j-1}, ..., e_0)-bias} \\ \cdot (1+\sum\limits_{i=0}^{x-1}m_{i}\cdot2^{-x+i})
\end{split}
\end{equation}
where the bias for the interpretation of the exponent is $bias=2^{j-1}-1$. Please note that the commonly used \fpexname~format is exactly an instance of this format for $j=8, x=23$ (single precision) and $j=11,x=52$ (double precision) \cite{30711}.
\end{definition}

In addition to the normal interpretation of numbers (\Cref{fpdef}), the \fpname~format includes a few exceptional cases for special numbers. The special encoding for positive and negative infinity and the encoding for \emph{not a number} is not further discussed in this paper, since the usage of these numbers does not occur in random forests. If positive or negative infinity should occur anyway, they are encoded as the smallest and largest representable number and thus make no difference for comparison. 

Since the normal interpretation cannot encode a $0$ (due to the implicit $1$ added to the mantissa), the special encoding for the representation of $0.0$ is all bits set to $0$. In addition, the format allows also the encoding of $-0.0$, when the sign bit is set to $1$ and all other bits are set to $0$. In this paper, we assume that $-0.0 < 0.0$, which differs from the definition $-0.0 = 0.0$ of the \fpexname~standard\footnote{$-0$ can be the result of rounding a not representable negative number. Extending our method to handle $-0.0$ equals to $0.0$ is quite straightforward by including one additional scenario during code generation.}. As the mantissa is always interpreted as a number between $1$ and $2$, the smallest representable absolute value would be limited to $2^{-bias}$. To extend this, the \fpname~format includes a \emph{denormalized} format, which is indicated by an exponent of all $0$s. In this format, the exponent is interpreted as $-bias+1$ and the mantissa is interpreted without implicit $1$ (i.e. as a number between $0$ and $1$). This essentially makes the representation of $0.0$ also a valid denormalized number.

\subsection{Ordering Between Floating Points and Signed Integers}

Now we show that the \fpname~format (when the bit vector is interpreted as \tkname) preserves the order of numbers for positive numbers and inverses the order for negative numbers. This is also illustrated in \Cref{fig:decorationfloatspace}, where the signed integer values (respectively, corresponding floating point values) of all combination of 32 bit vectors $B$ are plotted on the $x$ axis (respectively, $y$ axis).

As the intention of this paper is to evaluate the $\geq$ relation of \fpname~numbers in \tkname~arithmetic, we have to consider the equality of numbers first.
\begin{lemma}
\label{lemma:equal}

Given two arbitrary bit vectors $X,Y\in \{0,1\}^{k}$, then the \fpname~interpretation is the same for both numbers, if and only if

also the signed integer representation in \tkname~and the bit vector itself is the same.
\begin{equation}
    FP(X)=FP(Y) \Leftrightarrow X = Y \Leftrightarrow SI(X) = SI(Y)
\end{equation}
\end{lemma}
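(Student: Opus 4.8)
The plan is to split the chain of equivalences into the two biconditionals $X = Y \Leftrightarrow SI(X) = SI(Y)$ and $X = Y \Leftrightarrow FP(X) = FP(Y)$, and to prove each by observing that the interpretation maps are injective on the relevant domain. In both biconditionals the forward direction $X = Y \Rightarrow (\cdots)$ is immediate, because $SI$ and $FP$ are well-defined functions and equal arguments must produce equal values. All the content therefore lives in the two converse directions, which are precisely the assertions that $SI$ and that $FP$ are injective. Chaining the two biconditionals through the common middle term $X = Y$ then yields the full three-way statement.

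For the integer side, injectivity of $SI$ is straightforward from \ref{tkdef}: the map sends the $2^k$ distinct bit vectors in $\{0,1\}^{k}$ onto the $2^k$ consecutive integers in the range $[-2^{k-1}, 2^{k-1}-1]$, and this correspondence is a bijection. I would either cite this as a standard property of \tkname{} or give the one-line argument that, given a target value, reversing the positional sum in \eqref{tksieq} recovers each bit $b_i$ uniquely, so $SI(X) = SI(Y)$ forces $X = Y$.

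The substantive part is the injectivity of $FP$, i.e.\ showing $FP(X) = FP(Y) \Rightarrow X = Y$, and this is where I expect the main obstacle to lie, since $FP$ is not injective over the full IEEE format (multiple \emph{not a number} encodings, and the standard identification $+0.0 = -0.0$). I would handle it by a case analysis on the shared value $v = FP(X) = FP(Y)$, restricted to the non-special domain the paper isolates. If $v \neq 0$, the sign of $v$ fixes the sign bit $s$, and the magnitude $|v|$ determines the exponent and mantissa fields uniquely: for normalized numbers the implicit leading $1$ pins the decomposition $|v| = 2^{E-bias}\,(1 + \mathit{frac})$ with $\mathit{frac} \in [0,1)$, so exponent bits and mantissa bits are read off without ambiguity, and the analogous argument with a leading $0$ covers the denormalized range; here I would also verify that the normalized and denormalized ranges do not overlap, so that no value admits both kinds of encoding. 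If $v = 0$, the only two preimages are the all-zero vector ($+0.0$) and the sign-bit-only vector ($-0.0$), and this is exactly the point where the paper's convention $-0.0 < 0.0$ is essential: by declaring these two values distinct it makes each realize a unique bit vector, so injectivity of $FP$ holds precisely on the domain the paper relies upon.
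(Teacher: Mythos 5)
Your proposal is correct and shares the paper's overall strategy: both reduce the three-way equivalence to the injectivity (bijectivity) of the two interpretation maps $SI$ and $FP$, with the forward directions being trivial because the maps are functions. Where you genuinely diverge is in how $FP$-injectivity is justified. The paper's proof is a single sentence resting on the claim that ``the counted weight for the single bits is a power of $2$ \ldots hence the weight of one bit cannot be constructed as a sum of other bits,'' plus the observation that the sign bit fixes the sign in both formats. That distinct-powers-of-two argument is the standard one for positional codes like $SI$, but it does not literally apply to $FP$, where a mantissa bit's contribution is scaled by $2^{UI(e)-bias}$ and thus depends on the exponent field; the paper leaves this gap implicit. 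Your route closes it: you recover the sign bit from the sign of $v$, then argue that the decomposition $|v|=2^{E-bias}(1+\mathit{frac})$ with $\mathit{frac}\in[0,1)$ is unique for normalized encodings (and analogously with leading $0$ for denormalized ones), and you explicitly flag the need to check that the normalized and denormalized value ranges are disjoint --- which they are, since denormalized values lie in $[0,2^{-bias+1})$ and normalized ones are at least $2^{1-bias}$. Your treatment of $v=0$ via the paper's convention $-0.0\neq+0.0$ matches the paper's footnote exactly. In short, your proof buys rigor where the paper's is hand-wavy, at the cost of a longer case analysis; the paper buys brevity by asserting bijectivity with a justification that only fully works for the integer side.
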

\begin{proof}
Both formats, \fpname~and \tkname, are bijective for the mapping of the bit vector to a number\footnote{Our definition of the \fpname~format implies that $-0 \neq +0$, which ensures bijectivity. To accommodate for the definition of $-0 = +0$ (as in \fpexname), this case would need to be excluded here and added as a case distinction, presented at the end of  \Cref{sec:C-impl}}. The counted weight for the single bits is a power of $2$ in \fpname~and in \tkname. Hence, the weight of one bit cannot be constructed as a sum of other bits. Furthermore, numbers with a positive sign bit are always positive in both formats, numbers with a negative sign bit are always negative in both formats. Therefore, the bit vector of $X$ and $Y$ must be the same in both formats.
\end{proof}

As already explained in the beginning of this section, the interpretation of signed integer numbers uses the same binary ordering as the interpretation of unsigned integer numbers for both, positive and negative numbers. The floating point interpretation, in contrast, uses the same encoding of the exponent and mantissa for both, positive and negative numbers, and only distinguishes them by the sign bit. Therefore, the sign bit of the \fpname~format can be ignored in order to obtain the absolute value of a floating point number:
\begin{definition}
\label{fpmagndef}
Given a bit vector $B=(b_{k-1},...,b_0)=(s, e_{j-1}, ..., e_0, m_{x-1}, ..., m_0)$, the absolute value of the \fpname~interpretation for $j$ bit exponent and $x$ bit mantissa is defined as
\begin{equation}
    |FP(B)|=2^{UI(e_{j-1}, ..., e_0)-bias} \cdot (1+\sum\limits_{i=0}^{x-1}m_{i}\cdot2^{-x+i})
    \label{fpeq}
\end{equation}
\end{definition}

Since the \tkname~interpretation is order preserving for negative and positive numbers, the absolute value of the \fpname~interpretation follows the same order:

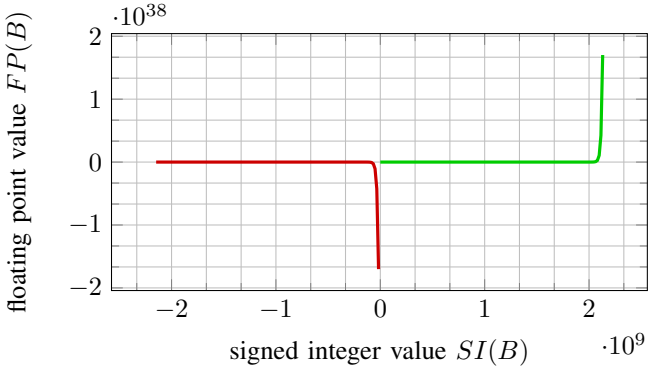
\begin{figure}
    \centering
    \begin{tikzpicture}
        \begin{axis}[grid=both, xlabel=signed integer value $SI(B)$, ylabel=floating point value $FP(B)$, height=5cm, width=.48\textwidth, minor tick num=2]
            \addplot[color=green!80!black, very thick] table [x index=0, y index=1, col sep=comma] {decoration/p.csv};
            \addplot[color=red!80!black, very thick] table [x index=0, y index=1, col sep=comma] {decoration/n.csv};
        \end{axis}
    \end{tikzpicture}
    \vspace{-.4cm}
    \caption{Illustration of signed integer (x axis) and floating point (y axis) space for all combination of 32 bit vectors $B$}
    \vspace{-.5cm}
    \label{fig:decorationfloatspace}
\end{figure}
\begin{lemma}
\label{lemma:monotonic}

Given two arbitrary bit vectors $X,Y\in \{0,1\}^{k}$ with the same sign bit $x_{k-1}=y_{k-1}$, the interpretation as the absolute value of the \fpname~and as signed integer numbers is strictly monotonically increasing:
\begin{equation}
    |FP(X)|>|FP(Y)| \Leftrightarrow SI(X) > SI(Y)
\end{equation}
\end{lemma}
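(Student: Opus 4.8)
The plan is to strip away the common sign bit, reduce both interpretations to the ordering of the remaining $k-1$ bits, and then show that $|FP|$ is a strictly increasing function of the unsigned value of those bits. Since a strictly increasing function $g$ on a totally ordered domain automatically satisfies $a>b \Leftrightarrow g(a)>g(b)$, it suffices to establish the single forward implication; the converse then follows by trichotomy (using that equal lower bits together with the equal sign bit force $X=Y$, hence equal $|FP|$, as in \Cref{lemma:equal}).

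First I would handle the signed-integer side. Writing $X=(x_{k-1},\ldots,x_0)$ and $Y=(y_{k-1},\ldots,y_0)$ with $x_{k-1}=y_{k-1}$, I expand $SI(X)-SI(Y)$ via \Cref{tkdef}. In the positive case ($x_{k-1}=0$) the claim is immediate, and in the negative case ($x_{k-1}=1$) the term $-2^{k-1}$ appears in both expansions and cancels. Hence in both cases $SI(X)>SI(Y)$ is equivalent to $\sum_{i=0}^{k-2}x_i 2^i > \sum_{i=0}^{k-2}y_i 2^i$, i.e.\ to the ordering of the lower $k-1$ bits read as an unsigned integer. I then split those bits into the exponent value $E=UI(e_{j-1},\ldots,e_0)$ and the raw mantissa $M=\sum_{i=0}^{x-1}m_i 2^i$, so that the unsigned value of the lower bits equals $E\cdot 2^x + M$ with $0\le M<2^x$ and $0\le E<2^j$.

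The core step is to show that $(E,M)\mapsto |FP| = 2^{E-bias}(1+M\,2^{-x})$ (\Cref{fpmagndef}) is strictly increasing in $E\cdot 2^x+M$. Assuming this quantity is larger for $X$ than for $Y$, either $E_X=E_Y$ with $M_X>M_Y$, or $E_X>E_Y$. In the first case the common positive factor $2^{E-bias}$ makes $|FP(X)|>|FP(Y)|$ follow directly from $M_X>M_Y$. The second case is the \emph{crux}: I must show that a single-step increase of the exponent outweighs any mantissa difference. Here I use the bound $1\le 1+M\,2^{-x}<2$, valid because $0\le M<2^x$, to obtain $|FP(Y)|<2^{E_Y-bias+1}\le 2^{E_X-bias}\le |FP(X)|$, where the middle inequality uses $E_Y+1\le E_X$. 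This is the main obstacle, and it is exactly where the bit layout matters: because the exponent occupies the more significant positions and the implicit leading $1$ confines the mantissa factor to $[1,2)$, one unit of exponent always dominates the entire mantissa range.

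Finally I chain the two reductions: $SI(X)>SI(Y)$ iff the lower bits of $X$ exceed those of $Y$ iff $|FP(X)|>|FP(Y)|$, which is the stated equivalence. I expect the denormalized case (exponent all zeros, where the true interpretation described in the text replaces the implicit $1$) to need only a short separate check, since there $|FP|$ is proportional to $M$ and the largest denormalized value stays strictly below the smallest normalized one, so strict monotonicity is preserved across the boundary.
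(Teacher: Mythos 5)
Your proof is correct and follows essentially the same route as the paper's: both rest on the sign/exponent/mantissa decomposition, split on whether the exponents agree, and hinge on the key fact that a single exponent increment outweighs the entire mantissa range because $1 \le 1+M\,2^{-x} < 2$. Your packaging --- cancelling the shared $-2^{k-1}$ term to reduce $SI$-ordering to the unsigned value of the lower $k-1$ bits, and presenting $|FP|$ as a strictly increasing function of $E\cdot 2^{x}+M$ so that only one implication needs proving --- is tighter and more explicit than the paper's two-directional verbal argument, but it is the same underlying idea.
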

\begin{proof}
Independent of the interpretation as \tkname~or as \fpname, the bit vectors can be divided into the three sections of sign bit, exponent and mantissa (\Cref{fpdef}). Since the sign bits are the same, the numbers have to differ in either the exponent and the mantissa bits according to \Cref{lemma:equal}. Consequently, we have to distinguish two cases:
\begin{itemize}[leftmargin=*]
    \item \textbf{Case 1}: Both numbers have the same exponent: $\Rightarrow$: Then both numbers are either in normal or in denormalized \fpname~format, thus the mantissa is either interpreted with implicit one or without for both numbers in \fpname~interpretation. Thus, the term $\sum\limits_{i=0}^{x-1}m_{i}\cdot2^{-x+i}$ from \Cref{fpmagndef} must evaluate to a larger number for $Y$ as for $X$. In \tkname~the bits of the mantissa are evaluated the same, just with another weight. From \Cref{tkdef}, the mantissa bits contribute with the term $\sum\limits_{i=0}^{x-1}m_{i}\cdot2^{i}=2^{x}\cdot\sum\limits_{i=0}^{x-1}m_{i}\cdot2^{i-x}$ to the total sum, thus with the same value as to the \fpname~format, weighted with a constant factor. Hence, this part of the sum also has to evaluate to a larger number in \tkname. Since the exponents are the same and the sign bit is the same, the remaining part of the sum in \Cref{tksieq} is the same for $X$ and $Y$. Consequently, $X$ evaluates to a larger number in \tkname~as $Y$. $\Leftarrow$: For $X$ to be larger than $Y$ in \tkname~while the exponent bits and the sign bit are the same, the part of the sum of the mantissa bits must evaluate to a larger value for $X$, which also increases the interpretation of the mantissa in \fpname~interpretation, since the bits contribute with another constant weight, as discussed before.
    \item \textbf{Case 2}: The two numbers have different exponent bits: $\Rightarrow$: $X$ must have a larger exponent than $Y$ since the interpretation of the mantissa $m$ ranges between $1 \leq m < 2$. If the exponent of $X$ would be smaller than the exponent of $Y$, the factor, the exponent contributes to \Cref{fpeq} would be at least smaller by a factor of 2. This could only be compensated by the mantissa, if $Y$ would be in denormalized format, which leads to a contradiction, since the denormalized format is encoded by the smallest possible exponent, hence $X$ cannot have a smaller exponent. Consequently, for the exponent of $X$ to be larger, $UI(e_{j-1}, ..., e_0)$ must evaluate to a larger number for $X$. Thus, the same bits must evaluate to a larger number also in \Cref{tksieq}. According to \Cref{tksieq}, the total contribution of the mantissa bits is smaller than any contribution of an exponent bit (as the weight for every higher significant bit is larger than the weight of all lower significant bits summed). Thus, $X$ must also evaluate to a larger number than $Y$ in \tkname. $\Leftarrow$: If the exponent bits are different, the part of the sum for the exponent bits in \Cref{tksieq} must be larger, since the mantissa bits cannot compensate a smaller sum due to their lower total weight. Then, the interpreted exponent in \fpname~must be larger as well, which cannot be compensated by the mantissa, as explained before.
\end{itemize}
Since the representation of $0$ is covered by the denormalized format in \fpname, all cases are considered.
\end{proof}

Next, we have to distinguish the cases for comparing two positive and two negative numbers.
\begin{lemma}
\label{lemma:positivemonotonic}
Given two arbitrary bit vectors $X,Y\in \{0,1\}^{k}$ with \textbf{positive} sign bit $x_{k-1}=y_{k-1}=0$, the interpretation as \fpname~and as signed integer numbers is strictly monotonically increasing:
\begin{equation}
    FP(X)>FP(Y) \Leftrightarrow SI(X) > SI(Y)
\end{equation}
\end{lemma}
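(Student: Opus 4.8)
The plan is to reduce this statement almost immediately to \Cref{lemma:monotonic}, since the hypothesis here (a common \emph{zero} sign bit $x_{k-1}=y_{k-1}=0$) is a special case of the hypothesis there (any common sign bit). The only gap I need to bridge is the passage from the absolute-value interpretation $|FP(\cdot)|$ appearing in \Cref{lemma:monotonic} to the signed interpretation $FP(\cdot)$ appearing in the present claim.

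First I would observe that for any bit vector $B$ whose sign bit is $0$, the sign factor in \Cref{fpdef} evaluates to $(-1)^0 = 1$, so that $FP(B) = |FP(B)|$ exactly. Applying this to both $X$ and $Y$ (each having sign bit $0$ by assumption) gives $FP(X) = |FP(X)|$ and $FP(Y) = |FP(Y)|$, whence the strict inequality $FP(X) > FP(Y)$ is literally the same as $|FP(X)| > |FP(Y)|$. Second, since $X$ and $Y$ share the sign bit $0$, \Cref{lemma:monotonic} applies verbatim and yields $|FP(X)| > |FP(Y)| \Leftrightarrow SI(X) > SI(Y)$. Chaining this equivalence with the identity from the previous step establishes $FP(X) > FP(Y) \Leftrightarrow SI(X) > SI(Y)$, which is exactly the claim.

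I do not expect any serious obstacle here, as all of the genuine case analysis (the interplay of exponent versus mantissa bits, and the treatment of the denormalized encoding that subsumes $+0.0$) was already discharged in the proof of \Cref{lemma:monotonic}. The one point deserving a word of care is directional: because both sign bits are $0$, the ordering is \emph{preserved} rather than inverted, in contrast to the negative case that must be handled separately; this directionality is automatic from the identity $FP = |FP|$ on non-negative inputs and so requires no further argument.
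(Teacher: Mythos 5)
Your proposal is correct and follows essentially the same route as the paper's own proof: observe that a zero sign bit makes the factor $(-1)^s$ in \Cref{fpdef} equal to $1$, so $FP$ coincides with $|FP|$ from \Cref{fpmagndef}, and then invoke \Cref{lemma:monotonic} directly. Your write-up is merely a slightly more explicit chaining of the two equivalences than the paper's three-line version.
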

\begin{proof}
    When the sign bit of both numbers is $0$, the term $(-1)^s$ in \Cref{fpdef} evaluates to $1$ and has no contribution. Then, the interpretation of the \fpname~number is exactly the same as in \Cref{fpmagndef}. Thus \Cref{lemma:monotonic} holds.
\end{proof}

\begin{lemma}
\label{lemma:negativemonotonic}
Given two arbitrary bit vectors $X,Y\in \{0,1\}^{k}$ with \textbf{negative} sign bit $x_{k-1}=y_{k-1}=1$, the interpretation as \fpname~and signed integer is monotonically decreasing:
\begin{equation}
    FP(X)\geq FP(Y) \Leftrightarrow SI(X) \leq SI(Y)
\end{equation}
\end{lemma}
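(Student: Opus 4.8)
The plan is to reduce this statement to \Cref{lemma:monotonic} by factoring out the sign. Since both $X$ and $Y$ have sign bit equal to $1$, the term $(-1)^s$ in \Cref{fpdef} evaluates to $-1$ for both, so that $FP(X)=-|FP(X)|$ and $FP(Y)=-|FP(Y)|$ with the magnitudes as in \Cref{fpmagndef}. First I would rewrite the left-hand inequality accordingly:
\[
FP(X)\geq FP(Y) \iff -|FP(X)| \geq -|FP(Y)| \iff |FP(X)| \leq |FP(Y)|.
\]
This is the single place where the order reversal enters the argument; everything afterwards is about translating a magnitude comparison into a \tkname~comparison, which is precisely what the earlier lemmas supply.

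Next I would upgrade \Cref{lemma:monotonic} from its strict form to a non-strict one, since the present lemma is stated with $\geq$ and $\leq$. \Cref{lemma:monotonic} gives $|FP(X)|>|FP(Y)| \Leftrightarrow SI(X)>SI(Y)$ for bit vectors sharing a sign bit, which is exactly our situation. Negating both sides of this biconditional immediately yields $|FP(X)|\leq|FP(Y)| \Leftrightarrow SI(X)\leq SI(Y)$. Alternatively, I would combine the strict equivalence with the equality equivalence of \Cref{lemma:equal}: when the shared sign bit is taken into account, equal magnitudes together with equal signs force $FP(X)=FP(Y)$ and hence $X=Y$, so $|FP(X)|=|FP(Y)| \Leftrightarrow SI(X)=SI(Y)$; splitting the $\leq$ case into its strict and equality parts then gives the same conclusion. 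Either route closes the gap between $>$ and $\geq$.

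Chaining the two equivalences produces
\[
FP(X)\geq FP(Y) \iff |FP(X)|\leq|FP(Y)| \iff SI(X)\leq SI(Y),
\]
which is the claim.

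The main obstacle I anticipate is not the algebra but the careful handling of the boundary vector $(1,0,\dots,0)$, i.e.\ $-0.0$: its \fpname~magnitude is $0$, yet its \tkname~reading is the most negative value $-2^{k-1}$. I would verify that this case is already consistent with \Cref{lemma:monotonic} — smallest magnitude paired with smallest $SI$ — so that, after the order reversal, $-0.0$ correctly becomes the largest negative \fpname~value while retaining the smallest signed integer, keeping the decreasing relationship intact. The comparison of $-0.0$ against $+0.0$ straddles the sign boundary and therefore lies outside this lemma, to be addressed only when the two sign-bit cases are combined into the final operator.
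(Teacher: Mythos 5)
Your proposal is correct and follows essentially the same route as the paper's proof: both factor out the sign via $-1\cdot FP(X)=|FP(X)|$, invoke \Cref{lemma:monotonic} for the magnitude-to-$SI$ equivalence, and pass from the strict to the non-strict form by negating both sides of the biconditional. Your extra check of the $(1,0,\dots,0)$ boundary case is sound but not needed beyond what \Cref{lemma:monotonic} already covers.
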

\begin{proof}
    When the sign bit of both numbers is $1$, the term $(-1)^s$ in \Cref{fpdef} evaluates to $-1$. Then, the interpretation of the \fpname~number is exactly the same as in \Cref{fpmagndef} with a constant factor of $-1$. Therefore, we can write $-1\cdot FP(X)=|FP(X)|$ and $-1\cdot FP(Y)=|FP(Y)|$. Starting from \Cref{lemma:monotonic}, we derive\\
    \begin{equation}
        -1\cdot FP(X) > -1\cdot FP(Y) \Leftrightarrow SI(X) > SI(Y)
    \end{equation}
    which can be transformed into
    \begin{equation}
        FP(X) <  FP(Y) \Leftrightarrow SI(X) > SI(Y)
    \end{equation}
    and further into
    \begin{equation}
        FP(X) \geq  FP(Y) \Leftrightarrow SI(X) \leq SI(Y)
    \end{equation}
\end{proof}

A comparison between two floating point numbers can, in addition to the considered cases, also operate on the mixed case of a positive and negative number. Since the sign bit in \fpname s at the position of the most significant bit also serves in \tkname~as a sign bit, these cases are covered as well.
\begin{lemma}
\label{lemma:mixedmonotonic}
Given two arbitrary bit vectors $X,Y\in \{0,1\}^{k}$ with \textbf{different} sign bits $x_{k-1}\neq y_{k-1}$, the interpretation as \fpname~and as signed integer numbers is strictly monotonically increasing:
\begin{equation}
    FP(X)>FP(Y) \Leftrightarrow SI(X) > SI(Y)
\end{equation}
\end{lemma}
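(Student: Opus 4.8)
The plan is to exploit that the sign bit of the \fpname~format occupies the most significant bit position, which is exactly the bit governing the sign in \tkname. Because $X$ and $Y$ disagree on this single bit, one of them is ``positive-signed'' and the other ``negative-signed'' \emph{simultaneously} under both interpretations. The whole task then collapses to checking that each interpretation ranks the positive-signed vector strictly above the negative-signed one; no reasoning about exponent or mantissa magnitudes is needed, in contrast to \Cref{lemma:monotonic}.

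Without loss of generality I would fix $x_{k-1}=0$ and $y_{k-1}=1$, treating the opposite assignment by swapping the roles of $X$ and $Y$. On the floating point side, a zero sign bit makes the factor $(-1)^{s}$ of \Cref{fpdef} equal to $+1$, so $FP(X)\geq 0$; a one sign bit makes it $-1$, so $FP(Y)\leq 0$, and under the paper's stipulation $-0.0<0.0$ even the vector with all non-sign bits zero gives $FP(Y)<0$. Hence $FP(X)>FP(Y)$ and the left-hand side of the equivalence is true. On the integer side, \Cref{tkdef} gives $SI(X)=\sum_{i=0}^{k-1}x_i 2^i\geq 0$ because the weight $-2^{k-1}$ of the MSB is not counted, whereas $SI(Y)=-2^{k-1}+\sum_{i=0}^{k-2}y_i 2^i\leq -1<0$ since the remaining bits contribute at most $2^{k-1}-1$. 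Thus $SI(X)>SI(Y)$, the right-hand side is also true, and the biconditional holds. The mirror case $x_{k-1}=1,\,y_{k-1}=0$ I would dispose of identically, obtaining $FP(X)<FP(Y)$ together with $SI(X)<SI(Y)$, so both sides of the equivalence are false and it again holds.

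The only delicate point in this otherwise immediate argument is the $\pm 0$ edge case, and I expect it to be the main obstacle. If one merely used the nonnegativity of the magnitude one could conclude no more than $FP(X)\geq FP(Y)$, and the pair $X=+0,\,Y=-0$ would genuinely break strict monotonicity under the standard \fpexname~identification $-0.0=0.0$. The argument therefore rests essentially on the paper's convention $-0.0<0.0$, which forces a set sign bit to yield a strictly negative floating point value and thereby preserves the exact correspondence with the strictly negative integers $SI(Y)\leq -1$; this is also consistent with \Cref{lemma:equal}, which already guarantees $FP(X)\neq FP(Y)$ whenever $X\neq Y$.
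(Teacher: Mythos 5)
Your proof is correct and follows essentially the same route as the paper's: the sign bit occupies the most significant position in both formats, so with differing sign bits one vector is simultaneously the positive one and the other the negative one under both interpretations, making both sides of the biconditional true (or both false) regardless of exponent and mantissa. Your treatment is actually more careful than the paper's one-line argument, since you explicitly verify the $SI(Y)\leq -1$ bound and isolate the $\pm 0$ edge case, correctly noting that strictness hinges on the paper's stipulation $-0.0<0.0$.
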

\begin{proof}
Negative numbers in \fpname~are indicated by the sign bit set to $1$, which also indicates a negative number in \tkname. Positive numbers in \fpname~are indicated by the sign bit set to $0$, which also indicates a positive number in \tkname. Hence, numbers are interpreted as negative and positive similarly in \fpname~and in \tkname. If one number is positive and the other is negative, the interpreted absolute value is irrelevant.
\end{proof}

Next, \Cref{lemma:equal} can be used to extend \Cref{lemma:negativemonotonic}:
\begin{lemma}
\label{lemma:negativestrictmonotonic}
Given two arbitrary bit vectors $X,Y\in \{0,1\}^{k}$ with \textbf{negative} sign bit $x_{k-1}=y_{k-1}=1$, the interpretation as \fpname~and as signed integer numbers is strictly monotonically decreasing:
\begin{equation}
    FP(X)> FP(Y) \Leftrightarrow SI(X) < SI(Y)
\end{equation}
\end{lemma}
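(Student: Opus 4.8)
The plan is to derive the strict version directly from the non-strict \Cref{lemma:negativemonotonic} together with the equality characterization \Cref{lemma:equal}, rather than re-running the case analysis on exponent and mantissa bits. The underlying observation is that a strict inequality is exactly a non-strict inequality combined with non-equality, and both of these pieces have already been established as equivalences under the $SI$ interpretation. Since $X$ and $Y$ both have negative sign bit, all the hypotheses of the two cited lemmas are satisfied, so I may use them verbatim.

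First I would rewrite the left-hand side by splitting the strict relation: $FP(X) > FP(Y)$ holds if and only if $FP(X) \geq FP(Y)$ and $FP(X) \neq FP(Y)$. For the first conjunct, \Cref{lemma:negativemonotonic} gives $FP(X) \geq FP(Y) \Leftrightarrow SI(X) \leq SI(Y)$. For the second conjunct, \Cref{lemma:equal} gives $FP(X) = FP(Y) \Leftrightarrow SI(X) = SI(Y)$, whose contrapositive yields $FP(X) \neq FP(Y) \Leftrightarrow SI(X) \neq SI(Y)$. Note that the equality equivalence is symmetric and therefore does not get reversed by the sign, even though the ordering in \Cref{lemma:negativemonotonic} is decreasing; this is precisely why the two translations compose cleanly.

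Combining the two translated conjuncts, I obtain the chain
\begin{equation}
    FP(X) > FP(Y) \Leftrightarrow \big(SI(X) \leq SI(Y)\big) \wedge \big(SI(X) \neq SI(Y)\big) \Leftrightarrow SI(X) < SI(Y),
\end{equation}
which is exactly the claimed strict monotonic decrease. The only point requiring care is bookkeeping the direction of the order reversal: the non-strict lemma flips $\geq$ into $\leq$, so the final non-strict-plus-non-equality bundle must collapse to the strict $<$ on the integer side, not $>$. I would verify that the $\leq$ from \Cref{lemma:negativemonotonic} and the $\neq$ from \Cref{lemma:equal} indeed conjoin to $<$ and not to $>$, which they do. Beyond this sign-direction check there is no genuine obstacle, since the heavy lifting on the bit-level structure was already completed in \Cref{lemma:monotonic} and inherited through \Cref{lemma:negativemonotonic}.
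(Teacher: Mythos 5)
Your proposal is correct and follows essentially the same route as the paper: both derive the strict version by combining the non-strict equivalence of \Cref{lemma:negativemonotonic} with the equality characterization of \Cref{lemma:equal} to exclude the equality case. Your write-up is in fact a cleaner and more explicit rendering of the same argument than the paper's rather terse phrasing.
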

\begin{proof}
    From \Cref{lemma:equal} we know that the interpretation as \tkname~can only be the same if and only if the interpretation in \fpname~is the same. Thus, $FP(X)=FP(Y) \Leftarrow SI(X)\neq SI(Y)$ or $FP(X)\neq FP(Y) \Rightarrow SI(X)= SI(Y)$ cannot happen. Thus, we can exclude the equality cases from \Cref{lemma:negativemonotonic}.
\end{proof}

\subsection{Design of the \mdname~Operator}

Leveraging the previous lemmata, we construct an evaluation of the $\geq$ relation for \fpname~numbers which only evaluates the $\geq$ and the $<$ (which is the logic negation of $\geq$) relation of \tkname~signed integer numbers.

\begin{corollary}
\label{corolarry:op}
Given two arbitrary bit vectors $X,Y\in \{0,1\}^{k}$, we can compute the $\geq$ relation between the \fpname~interpretation of these bit vectors, using only \tkname~signed integer arithmetic when distinguishing two cases:
\begin{align}
      &  FP(X)\geq FP(Y)\nonumber\\ \Leftrightarrow\nonumber\\
    &    \left\{ \begin{array}{ll}
         SI(X) < SI(Y) & \mbox{ if }FP(X) < 0 \land FP(Y) < 0 \\
         & \land FP(X) \neq FP(Y)\\
         SI(X) \geq SI(Y) &  \mbox{ otherwise}
    \end{array} \right.
\end{align}
\end{corollary}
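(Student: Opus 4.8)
The plan is to reduce \Cref{corolarry:op} to a case distinction on the sign bits of $X$ and $Y$, dispatching each configuration to the matching lemma proved above. First I would record the translation between the guards and the sign bits: under the convention $-0.0 < 0.0$ fixed in this section, $FP(B) < 0$ holds exactly when the sign bit $b_{k-1}$ equals $1$, and $FP(B) \geq 0$ holds exactly when $b_{k-1}=0$. Hence the guard $FP(X)<0 \land FP(Y)<0$ of the first branch is nothing but the statement that both operands carry a negative sign bit, while the remaining sign-bit patterns (both non-negative, or mixed signs) fall into the \emph{otherwise} branch. I would then argue the two branches are correct and that together they partition all inputs.

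For the \emph{otherwise} branch I would establish $FP(X)\geq FP(Y) \Leftrightarrow SI(X)\geq SI(Y)$ by splitting into the non-negative and the mixed case. When both sign bits are $0$, \Cref{lemma:positivemonotonic} gives the strict part $FP(X)>FP(Y) \Leftrightarrow SI(X)>SI(Y)$ and \Cref{lemma:equal} gives $FP(X)=FP(Y) \Leftrightarrow SI(X)=SI(Y)$; combining the strict relation with the equality relation yields the non-strict equivalence. When the sign bits differ, \Cref{lemma:mixedmonotonic} supplies $FP(X)>FP(Y) \Leftrightarrow SI(X)>SI(Y)$, and since differing sign bits force $X\neq Y$, both $FP(X)\neq FP(Y)$ and $SI(X)\neq SI(Y)$ hold by \Cref{lemma:equal}, so the strict and non-strict relations coincide on each side and the equivalence again follows.

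The delicate case, and the one I expect to be the main obstacle, is the first branch where both numbers are negative, because there the order is reversed by \Cref{lemma:negativestrictmonotonic}, which gives $FP(X)>FP(Y) \Leftrightarrow SI(X)<SI(Y)$. The difficulty is that the target relation is $\geq$ rather than $>$, and for negatives one cannot simply relax $SI(X)<SI(Y)$ to $SI(X)\leq SI(Y)$: at equality $FP(X)=FP(Y)$ we have $SI(X)=SI(Y)$ by \Cref{lemma:equal}, so $FP(X)\geq FP(Y)$ is true while $SI(X)<SI(Y)$ is false. This is exactly why the first branch is further guarded by $FP(X)\neq FP(Y)$: under this guard $FP(X)\geq FP(Y)$ collapses to $FP(X)>FP(Y)$, which \Cref{lemma:negativestrictmonotonic} translates precisely to $SI(X)<SI(Y)$, matching the branch.

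Finally I would check that the excluded boundary is absorbed correctly so that the split is exhaustive and non-overlapping. The only configuration left out of the first branch is two negatives with $FP(X)=FP(Y)$; by \Cref{lemma:equal} this forces $SI(X)=SI(Y)$, whence both $FP(X)\geq FP(Y)$ and $SI(X)\geq SI(Y)$ hold, so this case satisfies the \emph{otherwise} equivalence as well. Since every input lands in exactly one branch and each branch has been verified, assembling the four sign configurations establishes the corollary.
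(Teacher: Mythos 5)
Your proposal is correct and follows essentially the same route as the paper: the same case split on sign configurations, dispatching the both-negative-and-unequal branch to \Cref{lemma:negativestrictmonotonic} and the remaining configurations (equal operands, both non-negative, mixed signs) to \Cref{lemma:equal}, \Cref{lemma:positivemonotonic}, and \Cref{lemma:mixedmonotonic}. You are somewhat more explicit than the paper about lifting the strict equivalences to the non-strict $\geq$ relation and about why the equality boundary is absorbed by the second branch, but this is a refinement of the same argument rather than a different one.
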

\begin{proof}
    The first case (both numbers negative) is discussed in \Cref{lemma:negativestrictmonotonic}. It should be noted that this only covers the case that both numbers are negative, but not equal. For the case that the numbers are equal, either positive or negative, \Cref{lemma:equal} shows that the second case holds. Also for the case that both numbers are positive, but not equal, \Cref{lemma:positivemonotonic} shows that the second case holds. For the case that only one number is positive, \Cref{lemma:mixedmonotonic} shows that also the second case holds. Since the second case consists of the latter three cases, all cases are covered. It should be also noted that the condition, whether the first or second case is needed, also can be evaluated on the signed integer representation, according to \Cref{lemma:mixedmonotonic}, \Cref{lemma:positivemonotonic} and \Cref{lemma:equal}. The evaluation of the first and the second number is negative could also be done independent of the format interpretation by only extracting the sign bits $x_{k-1}$ and $y_{k-1}$.
\end{proof}

\begin{theorem}
\label{theorem:intoperator}
Given two arbitrary bit vectors $X,Y\in \{0,1\}^{k}$, we can compute the $\geq$ relation between the \fpname~interpretation of these bit vectors, using only \tkname~signed integer arithmetic, with the following operation:
\begin{align}
      &FP(X)\geq FP(Y)\nonumber \\
      \Leftrightarrow\nonumber \\  
     &\left(SI(X) \geq SI(Y)\right) \oplus\nonumber \\
  & \left((SI(X) < 0 \land SI(Y) < 0 \land SI(X) \neq SI(Y)\right)   
\end{align}
\end{theorem}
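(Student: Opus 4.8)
The plan is to derive the theorem directly from \Cref{corolarry:op} by observing that the exclusive-or with the side condition acts as a \emph{conditional negation} of the signed-integer comparison. First I would introduce the shorthand $A := (SI(X) \geq SI(Y))$ for the integer comparison and $C := (SI(X) < 0 \land SI(Y) < 0 \land SI(X) \neq SI(Y))$ for the predicate that selects the first branch of the corollary, so that the claimed right-hand side reads simply as $A \oplus C$. Since both operands of $\oplus$ are Boolean, I would then use the elementary fact that $A \oplus C$ equals $\neg A$ when $C$ is true and equals $A$ when $C$ is false, reducing the theorem to checking that these two outputs agree with the two branches of \Cref{corolarry:op}.

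Next I would split into the same two cases the corollary uses. When $C$ holds (both numbers negative and unequal), the corollary gives $FP(X) \geq FP(Y) \Leftrightarrow SI(X) < SI(Y)$; here I would invoke trichotomy of the integers, $\neg(SI(X) \geq SI(Y)) \Leftrightarrow SI(X) < SI(Y)$, so that $A \oplus C = \neg A$ reproduces exactly the corollary's first branch. When $C$ fails, the corollary gives $FP(X) \geq FP(Y) \Leftrightarrow SI(X) \geq SI(Y)$, and since $A \oplus C = A$ the right-hand side again matches. Combining the two cases establishes the equivalence for every pair of bit vectors, and because the selector $C$ was shown in \Cref{corolarry:op} to be expressible on the signed-integer representation, no floating-point information is consulted at any point.

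I expect there to be no genuine obstacle, the statement being essentially a Boolean repackaging of \Cref{corolarry:op}; the one point worth stating carefully is the role of the subclause $SI(X) \neq SI(Y)$ inside $C$. It is this inequality test that prevents equal negative operands from being flipped: for such a pair $C$ evaluates to false, so $A \oplus C = A$ is true, correctly reflecting $FP(X) = FP(Y)$ and hence $FP(X) \geq FP(Y)$, whereas dropping the subclause would wrongly flip the result to $SI(X) < SI(Y)$, which is false. A closing remark I would add is that every ingredient of $A \oplus C$ is a signed-integer comparison, a strict comparison against $0$, an inequality test, or the XOR itself, so the constructed operator avoids floating-point evaluation entirely, which is the purpose of \mdname.
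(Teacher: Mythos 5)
Your proposal is correct and follows essentially the same route as the paper: both derive the theorem from Corollary~\ref{corolarry:op} by reading the XOR as a conditional negation of $SI(X)\geq SI(Y)$, using $\neg(SI(X)\geq SI(Y))\Leftrightarrow SI(X)<SI(Y)$ and the fact that the selector condition can be evaluated on the signed-integer representation. Your write-up is somewhat more explicit about the two-case verification and the role of the $SI(X)\neq SI(Y)$ subclause, but the argument is the same.
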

Here, we use the XOR function $\oplus$ to achieve negation in case the second input is $true$.
Let $u=(SI(X) \geq SI(Y))$ and $v=\left((SI(X) < 0 \land SI(Y) < 0 \land SI(X) \neq SI(Y)\right)$. Applying XOR to the value $u$ while the second input $v$ is false, evaluates to the identity function ($u\oplus false=u$), applying XOR to the value $u$ while the second input $v$ is true, evaluates to the negation ($u\oplus true=\neg u$).

\begin{proof}

    Since $\neg (SI(X) \geq SI(Y))$  is $SI(X) < SI(Y)$, we know from \Cref{corolarry:op} that we only need to compute $u$ and negate the result when the first case applies. 
    Hence, we evaluate the condition for the first or second case $v$ based on signed integer arithmetic, which delivers true when the condition holds and false when the condition does not hold. In order to achieve negation in case the condition holds, we apply the exclusive or (XOR) function $\oplus$ here. 
\end{proof}

\noindent\textbf{Towards efficient computation:} In the former part of this section, we present a method to perform floating point comparisons by only using signed integer arithmetic and logic operations. In many CPU instructions sets (including X86 and ARMv8), there is no dedicated operation to compute the $<$ or $\neq$ relation. Instead, a comparison instruction needs to be called and a subsequent conditional set or even a conditional branch is required. Hence, the method from \Cref{theorem:intoperator} would require in total four comparisons and conditional set or branch instructions. Depending on the CPU architecture, it may be more efficient to check only if $SI(X) < 0$ and exchange and invert $X$ and $Y$:
\begin{theorem}
\label{theorem:negativeknown}
Given two arbitrary bit vectors $X,Y\in \{0,1\}^{k}$ where the positiveness of $FP(X)$ (equivalently $SI(X)$) is known a priori, the $\geq$ relation can be computed between the \fpname~interpretation of these bit vectors, using only \tkname~signed integer arithmetic:
\begin{align}
      &  FP(X)\geq FP(Y)\nonumber\\ \Leftrightarrow\nonumber\\
    &    \left\{ \begin{array}{ll}
         -1\cdot SI(Y) \geq -1\cdot SI(X) & \mbox{ if }SI(X) < 0 \\
         SI(X) \geq SI(Y) &  \mbox{ otherwise}
    \end{array} \right.
\end{align}
\end{theorem}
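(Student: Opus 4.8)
The plan is to reduce \Cref{theorem:negativeknown} to \Cref{corolarry:op}, exploiting the extra hypothesis that the sign of $X$ is fixed in advance. The entire value of knowing $\mathrm{sign}(X)$ is that it collapses the case distinction of \Cref{corolarry:op}: the ``first case'' of that corollary requires \emph{both} operands to be negative, so once the sign of $SI(X)$ is settled we can decide ahead of time which branch of the corollary is ever reachable. I would therefore split the proof along exactly the two branches written in the theorem, $SI(X)\geq 0$ and $SI(X)<0$, and discharge each against \Cref{corolarry:op}.

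The branch $SI(X)\geq 0$ should be immediate. If $X$ is non-negative then the guard $FP(X)<0$ of the first case of \Cref{corolarry:op} can never hold, so only the ``otherwise'' case survives and yields $FP(X)\geq FP(Y)\Leftrightarrow SI(X)\geq SI(Y)$. This is verbatim the second line of the theorem, so no further work is needed in this branch.

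The branch $SI(X)<0$ is the substantive one, and here I would argue by a further sub-split on the sign of $Y$. When $Y$ is also negative, \Cref{lemma:negativemonotonic} and \Cref{lemma:negativestrictmonotonic} give the \emph{order-reversing} equivalence between the \fpname~and \tkname~interpretations, with the equality (boundary) cases handled by \Cref{lemma:equal}; this reversal is what forces the comparison to be flipped relative to the positive case. When $Y$ is non-negative, \Cref{lemma:mixedmonotonic} applies and the verdict is decided purely by the opposing signs, since a negative $FP(X)$ can never be $\geq$ a non-negative $FP(Y)$. The remaining task is to check that the single expression $-1\cdot SI(Y)\geq -1\cdot SI(X)$ reproduces both sub-cases at once.

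I expect this last check to be the main obstacle. Arithmetic negation reverses the direction of a $\geq$ comparison, which is precisely the mechanism that should undo the order reversal identified by \Cref{lemma:negativemonotonic} for two negative operands; but I would scrutinise very carefully whether one negated signed comparison can \emph{simultaneously} deliver the reversed ordering in the both-negative sub-case and the forced-false answer in the mixed-sign sub-case. The delicate ingredient is the behaviour of \tkname~negation near the range boundary --- in particular the most negative value, whose negation overflows, and the fact that negating a negative operand lands it in the positive half of the signed range. I would want to pin down whether the comparison that actually makes both sub-cases agree is the signed one written here or is in effect an unsigned comparison of $X$ and $Y$, since ``swap-and-negate'' reductions of this kind are exactly where the signed/unsigned boundary can cause a silent failure. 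Confirming (or, if necessary, correcting) the direction of the inequality and the operand swap in this branch is where I would concentrate essentially all of the effort.
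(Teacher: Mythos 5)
Your reduction to \Cref{corolarry:op} and your treatment of the branch $SI(X)\geq 0$ match the paper, and you have put your finger on exactly the right pressure point --- but the proposal stops short of resolving it, and that resolution is the entire content of the theorem. Under the literal reading of $-1\cdot SI(Z)$ as arithmetic negation of the integer $SI(Z)$, the claimed equivalence is \emph{false} in the both-negative sub-case: $-1\cdot SI(Y)\geq -1\cdot SI(X)$ is then just $SI(X)\geq SI(Y)$, whereas \Cref{lemma:negativestrictmonotonic} together with \Cref{lemma:equal} forces $FP(X)\geq FP(Y)\Leftrightarrow SI(X)\leq SI(Y)$. Concretely, for single precision $FP(X)=-1.0$ and $FP(Y)=-2.0$ one has $SI(X)=-1082130432 < SI(Y)=-1073741824$, so $-1\cdot SI(Y)\geq -1\cdot SI(X)$ fails while $FP(X)\geq FP(Y)$ holds. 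The statement is only correct when $-1\cdot SI(Z)$ is read as ``the signed-integer interpretation of the bit vector of $-1\cdot FP(Z)$'', i.e.\ of $Z$ with its sign bit flipped --- which is what the implementation in \Cref{ifelseflintcneg} actually computes with the XOR against the sign-bit mask, and which is emphatically not \tkname~negation (the two differ by far more than the single overflow at $-2^{k-1}$ you flagged).

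Once that reading is fixed, the proof closes without your sub-split on the sign of $Y$: writing $\bar Z$ for $Z$ with flipped sign bit, $FP(\bar Z)=-FP(Z)$, so $FP(X)\geq FP(Y)\Leftrightarrow FP(\bar Y)\geq FP(\bar X)$; since $SI(X)<0$ gives $FP(\bar X)\geq 0$, the pair $(\bar Y,\bar X)$ can never satisfy the both-negative guard of \Cref{corolarry:op}, whose second case then yields $FP(\bar Y)\geq FP(\bar X)\Leftrightarrow SI(\bar Y)\geq SI(\bar X)$ uniformly, covering your both-negative and mixed-sign sub-cases in one stroke. That is the paper's (very terse) argument. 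So your suspicion about the negation boundary was warranted, but a complete proof must commit to the sign-flip semantics of ``$-1\cdot$'' and carry out this reduction; as submitted, the decisive step is left as an open check, and the one naive way of closing it does not work.
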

\begin{proof}
    Following \Cref{corolarry:op}, the second case is needed when $X$ is positive. If $X$ is negative, the comparison $SI(X) \geq SI(Y)$ can be directly transformed to $-1 \cdot SI(X) \leq -1 \cdot SI(Y) \Leftrightarrow -1\cdot SI(Y) \geq -1\cdot SI(X)$, which is then a comparison with  at least one positive operand, thus the second case from \Cref{corolarry:op} applies again.
\end{proof}

Please note that in \Cref{theorem:negativeknown} always one operand is ensured to be positive for the comparison. Hence, the equivalence $FP(X) \geq FP(Y) \Leftrightarrow SI(X) \geq SI(Y)$ or $FP(X) \geq FP(Y) \Leftrightarrow -1\cdot SI(Y) \geq -1\cdot SI(X)$  holds. This also implies that all other relations ($\leq$, $>$, $<$) hold in the same manner. Especially for integrating \mdname~into program code, this allows the usage of arbitrary comparison constructs.

\section{Efficient Implementation of FLInts in Low Level Random Forest Execution}
\label{sec_dt}
In this section, we introduce \mdname, an operator to compute the $\geq$ relation of \fpname~numbers by only using \tkname~signed integer arithmetic and logic operations. In the following, we show how \mdnames~can be efficiently utilized in random forest execution and can omit the need of floating point arithmetic entirely. 

\subsection{Design Overview}
We consider a random forest to consist of multiple decision trees. We consider every decision tree to consist of a set of nodes $N=\{n_0, n_1, ..., n_m\}$ where $n_0$ is the root node. Every node is associated with a feature index $FI(n_x)$, a split value $SP(n_x)$, a left child pointer $LC(n_x)$, a right child pointer $RC(n_x)$ and a prediction value $PR(n_x)$. For inner nodes, the prediction value is not needed, for leaf nodes the left and right child pointers are not needed. When executing the tree, a feature vector $F=(f_0, f_1, ..., f_n)$ serves as the input and the ultimate goal is to find the corresponding prediction, associated with the given feature vector. Therefore, the inference begins at the root node and visits a sequence of nodes $n_{x_0}, n_{x_1}, ..., n_{x_o}$, such that
\[n_{x_{i+1}}=\left\{\begin{array}{ll}LC(n_{x_i})& \mbox{if }F(FI(n_{x_i})) \leq SP(n_{x_i})\\RC(n_{x_i})&\mbox{otherwise}\end{array}\right.\] until a leaf node is reached. The prediction of this leaf node is then returned as a result. 
Please note that previously in this paper, we consider the $\geq$ relation only. However, by exchanging the first and the second operand, also the $\leq$ relation is enabled. Furthermore, by negation also the $>$ and the $<$ relation is enabled, thus we use different comparison operations in the following.
The datatype of the feature vector is defined by the data source. The split values are derived during the training of the decision tree from a training data set, which should come from the same source as the input data. Hence, the datatype of the split value follows the datatype of the feature vector. Whenever the feature vector consists of floating point numbers, the decision tree has to perform floating point comparison during inference.

After training, the tree nodes can be stored in an arbitrary form together with their associated values. For efficient execution, however, a decision tree can be implemented into dedicated source code, compiled and executed. For implementations of decision trees, two distinct methods have been identified by \cite{Asadi/etal/2014}: 1) \emph{native trees} where nodes become an array like data structure and a narrow loop reads out the node values and maintains an index of the current node and 2) \emph{if-else trees}, where nodes become nested if-else blocks. As the branch condition, the comparison between the feature value and the split value is taken, the entire further code for the left subtree is then placed into the if block, the entire code for the right subtree is placed into the else block. 

While it is not obvious which one is the superior implementation, if-else trees are shown to be more efficient in several empirical scenarios \cite{chen2022efficient}. Furthermore, cache-aware implementations exist, namely \textbf{arch-forest}, where the if-and-else branches are swapped according to the empirical branching probability from the training data set in order to achieve memory locality. We take the arch-forest framework of this implementation of if-else trees as a basis\footnote{\url{https://github.com/tudo-ls8/arch-forest}} and integrate \mdname~into the code generation process. The modified implementation is publicly available\footnote{\url{https://github.com/tu-dortmund-ls12-rt/arch-forest/tree/flintcomparison}}. This framework is modular and can be extended by additional code generators, which transform random forest models directly into machine code. The training of the trees is based on scikit-learn \cite{JMLR:v12:pedregosa11a}. It should be noted that \mdnames~can also be integrated to native tree implementations in C without further issues.

\subsection{Implementation in C}
\label{sec:C-impl}
As the first approach, we extend the arch-forest framework with a code generator to implement \mdnames~directly in the C code realization of if-else trees. After the implementation, an if-else tree consists of a bunch of nested if-else blocks, illustrated in \Cref{ifelsestandardc}.
In a first step, we reinterpret the \emph{pX} array as an array of signed integer numbers and load the corresponding element. Next, we place the \fpexname~encoding of the comparison value as an immediate constant and also interpret this as a signed integer. Thanks to \Cref{theorem:negativeknown}, the example in \Cref{ifelsestandardc} is equivalent to that in \Cref{ifelseflintc}.
\begin{lstlisting}[language=C, caption=Standard if-else tree in C, captionpos=b, numbers=left, basicstyle=\footnotesize, label=ifelsestandardc, xleftmargin=2em, float=t, belowskip=-.5cm]
if(pX[3] <= (float) 10.074347){
    if(pX[83] <= (float) 11.974715){
        if(pX[24] <= (float) 10430.507324){
            ...
\end{lstlisting}
\begin{lstlisting}[language=C, caption=\mdname~if-else tree in C, captionpos=b, numbers=left, basicstyle=\footnotesize, label=ifelseflintc, xleftmargin=2em, float=t, belowskip=-.5cm]
if((*(((int *)(pX))+3))<=((int)(0x41213087))){
 if((*(((int *)(pX))+83))<=((int)(0x413f986e))){
  if((*(((int *)(pX))+24))<=((int)(0x4622fa08))){
   ...
\end{lstlisting}
Since the split value of a node is a constant during the implementation time, we can resolve the condition for \Cref{theorem:negativeknown} already during the code generation and can exclude the case of comparing two negative \fpname~numbers. For positive split values, the code is generated as in the previous example. For negative split values, we multiply both numbers with $-1$ (flip the sign bit) and inverse the comparison, as illustrated in \Cref{ifelsestandardcneg} and \Cref{ifelseflintcneg}.
\begin{lstlisting}[language=C, caption=Standard if-else tree in C (negative split value), captionpos=b, numbers=left, basicstyle=\footnotesize, label=ifelsestandardcneg, xleftmargin=2em, float=t, belowskip=-.6cm]
if(pX[125] <= (float) -2.935417){
\end{lstlisting}
\begin{lstlisting}[language=C, caption=\mdname~if-else tree in C (negative split value), captionpos=b, numbers=left, basicstyle=\footnotesize, label=ifelseflintcneg, xleftmargin=2em, float=t, belowskip=-.5cm]
if(((int)(0x403bddde))<=(*(((int *)(pX))+125)^
                                (0b1 << 31))){
\end{lstlisting}
This ensures that we always compare either two positive numbers or at least one positive and one negative number. Thus, no further logic operations are required for any case distinction. \mdnames~have one semantic difference from \fpexname, i.e. that \mdname~assumes $-0<+0$ and \fpexname~assumes $-0=+0$. We include that into our implementation by rewriting a split value of $-0$ to $+0$ during the code generation. Since we only need to perform $\leq$ comparisons, the different assumptions of \mdname~and \fpexname~have no impact, as shown in Theorem~\ref{theorem:negativeknown}.

\subsection{Other Implementations}
In fact, the \mdname~is not limited to C. Any language that allows a reinterpretation of the incoming floating point data to a signed integer representation is applicable to realize the concept, with various overheads depending on the adopted language.
To explicitly eliminate this kind of overhead, one possibility is to implement \mdname~directly in machine assembly code, but under limited portability and applicability.

\begin{lstlisting}[language={[x86masm]Assembler}, caption=\mdname~assembly implementation (ARMv8), captionpos=b, morekeywords={ldrsw, movz, movk, fmov, fcmp, b.gt}, numbers=left, basicstyle=\footnotesize, label=flintasm, xleftmargin=2em, float=t, belowskip=-.6cm]
ldrsw x1, [%1, 12];
movz x2, #0x3087;
movk x2, #0x4121, lsl 16;
cmp w1, w2;
b.gt __rtitt_lab_0_0;
   ldrsw x1, [%1, 332]
   movz x2, #0x986e;
   movk x2, #0x413f, lsl 16;
   cmp w1, w2;
   b.gt __rtitt_lab_1_0;
      ldrsw x1, [%1, 96];
      movz x2, #0xfa08;
      movk x2, #0x4622, lsl 16;
      cmp w1, w2;
      b.gt __rtitt_lab_2_0;
\end{lstlisting}

An illustrative realization can be found in \Cref{flintasm}, which is derived from our code generator, i.e., an extended arch-forest directly generates standard if-else trees as X86 or ARMv8 assembly code. The pointer to the feature vector is passed as $\%1$ by using inline assembly. Then, in each block the signed word is directly loaded from that address plus the feature offset. Afterwards, the split value is encoded as a constant immediate and loaded to another register, which is then used together with the previously loaded register for comparison and conditional branching. When it comes to negative split values, we also inverse the loaded feature value by using an additional \texttt{eor} (exclusive or) instruction to flip the sign bit. Our code generator supports this kind of direct loading for ARMv8 and X86 processors. Furthermore, single precision (\emph{float}) and double precision (\emph{double}) datatypes are supported.

\section{Evaluation}
\label{sec_eval}

In the former part of this paper, we discuss a method how to eliminate floating point operations entirely from random forest inference, while not changing the result of the model at all. Although there may be unavoidable motivations to eliminate the use of floating points from a system (e.g. no presence of a hardware floating point unit or high energy consumption of the floating point unit), we study a more general motivation in the following: the reduction of \emph{execution time}. The usage of floating point units can lead to higher execution time for various reasons. On the one hand, floating point operations can simply consume more time than equivalent integer operations. On the other hand, the usage of floating points introduces a certain overhead of usage of dedicated floating point registers and value conversion, which can lead to additional machine instructions. To comprehensively study the impact on the allover performance in terms of execution time for random forests of using \mdname, we conducted experiments on multiple data sets, machine classes and CPU architectures in the following.

\subsection{Evaluation Setup}
We utilized scikit-learn to train multiple random forest configurations on a subset of data sets from the UCI machine learning repository~\cite{Dua:2019}: The \textit{EEG Eye State Data Set} (eye), the \textit{Gas Sensor Array Drift Data Set} (gas), the \textit{MAGIC Gamma Telescope Data Set} (magic), the \textit{Sensorless Drive Diagnosis Data Set} (sensorless) and the \textit{Wine Quality Data Set} (wine). All these data sets contain floating point values, thus scikit-learn inherently created floating point split values for the trained random forests and decision trees. 

For every data set, we trained random forests with $\{1, 5, 10, 15, 20, 30, 50, 80, 100\}$ trees. 
For every random forest size, we limited the maximal depth of all trees to $\{1, 5, 10, 15, 20, 30, 50\}$ layers. Please note that this was only a maximal depth, the training may thus lead to smaller trees, which was not under control. We furthermore did not perform any tuning of hyper parameters and utilize scikit-learn in the standard configuration, since the optimal creation of random forests is out of the scope of this paper. Consequently, we split our data sets into $75\%$ training data and $25\%$ test data and measured the execution time of the random forests only on the formerly unseen test data.

To evaluate the impact of the omission of the use of floating point units on the execution time, we execute the random forests on X86 and ARMv8 systems. For each architecture, we consider a server class and a desktop class system. The machine details can be found in \Cref{tab:machine}. All systems run Linux without any underlying hypervisor or simulation system.
\begin{table*}
    \centering
    \caption{Machine details for evaluation}
    \vspace{-.2cm}
    \begin{tabular}{l|llll}
         \textbf{Machine}&\textbf{System}&\textbf{CPU}&\textbf{RAM}&\textbf{Linux kernel}\\
         \hline
         \textbf{X86 Server}&Gigabyte R182-Z92-00&2x AMD EPYC 7742&256GB DDR4&5.10.0 x86\_64\\
         \textbf{X86 Desktop}&Dell OptiPlex 5090&Intel Core i7-10700&64GB DDR4&5.10.106 x86\_64\\
         \textbf{ARMv8 Server}&Gigabyte R181-T9&2x Cavium ThunderX2 99xx&256GB DDR4&5.4.0 aarch64\\
         \textbf{ARMv8 Desktop}&Apple Mac Mini&Apple Silicon M1&16GB DDR4&5.17.0 aarch64\\
    \end{tabular}
    \vspace{-.4cm}
    \label{tab:machine}
\end{table*}

To compare the achievement in terms of execution time reduction, we consider multiple implementations for every random forest, including the state-of-the-art~\cite{chen2022efficient}:
\begin{enumerate}
    \item A standard if-else tree, where tree nodes are straightforward translated into nested if-else blocks and normal floating point numbers are used
    \item A cache-aware if-else tree implementation\cite{chen2022efficient} (which is an extension of \cite{8594826}), called \kuanmdnameabbr~(\kuanmdname) in the following, where if-else blocks are swapped and jumps between nested if-else blocks are introduced in order to optimize the cache efficiency
    \item The C implementation of the standard if-else with \mdname
    \item An Implementation of \kuanmdnameabbr~with \mdname~integrated
\end{enumerate}
For the latter three implementations, we compute the normalized execution time to the standard implementation, by which we derive a fraction of the execution time of the naive version, which indicates the gained improvement. We further group all configurations with the same maximal tree depth together and present them by their mean normalized execution time and the corresponding variance across all data-sets and number of trees within the ensemble. 

Please note that the integration of \mdname~into \kuanmdnameabbr~is straightforward and does not consider any changes of the original \kuanmdnameabbr~implementation. This algorithm, however, includes assumptions and considerations about the available caches and the usage of cache by decision tree nodes. Due to the implementation of \mdname, several of these assumptions may be violated and need to be re-evaluated, which is out of the scope of this paper. 
In addition, \mdname s are integrated into \kuanmdnameabbr~only as the C-based implementation, since the integration of the assembly version requires dedicated effort to generally let the \kuanmdnameabbr~method directly produce assembly code. Thus, explicitly rewriting \kuanmdnameabbr~into an assembly based generator and accordingly integrate \mdname~could lead to further performance impacts.

\subsection{Evaluation Results}
\begin{figure*}[h!]
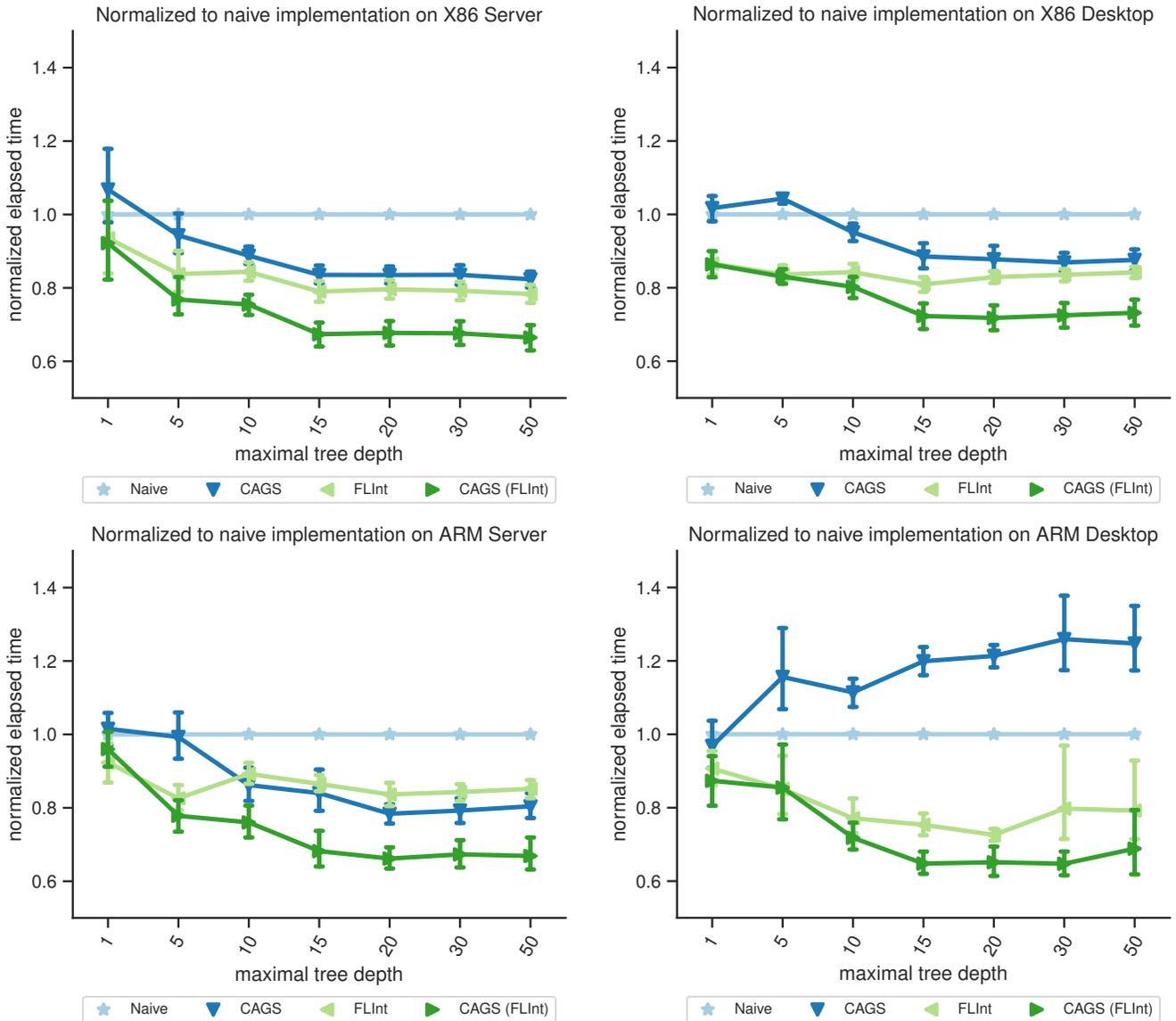

    \begin{minipage}{.49\textwidth}
        \scalebox{.7}{
            \input{plots/results_X86_Server_StandardIfTree-Depth.pgf}
        }
    \end{minipage}
    \begin{minipage}{.49\textwidth}
        \scalebox{.7}{
            \input{plots/results_X86_Desktop_StandardIfTree-Depth.pgf}
        }
    \end{minipage}
    
    \begin{minipage}{.49\textwidth}
        \scalebox{.7}{
            \input{plots/results_ARM_Server_StandardIfTree-Depth.pgf}
        }
    \end{minipage}
    \begin{minipage}{.49\textwidth}
        \scalebox{.7}{
            \input{plots/results_ARM_Desktop_StandardIfTree-Depth.pgf}
        }
    \end{minipage}
    \caption{Normalized execution time for increasing maximal tree depth}
    \vspace{-.5cm}
    \label{fig:results}
\end{figure*}
We illustrate the average (geometric mean) normalized execution time across all data sets and ensemble sizes for a specific maximal depth of the single trees in \Cref{fig:results} for all considered test systems. The naive standard implementation is illustrated in light blue with star tick marks as the baseline. The cache-aware implementation (\kuanmdnameabbr) for if-else trees from Chen et~al.~\cite{chen2022efficient} is illustrated in dark blue with down arrow tick marks. The \mdname~results for standard trees are then depicted by a light green line with left arrow tick marks. The \kuanmdnameabbr~implementation with integrated \mdname s is depicted by a dark green line and right arrow tick marks. While the x-axis depicts a growing maximal depth of trees, the y-axis presents the fraction of execution time from the naive baseline version. A value of e.g. $0.75$ thus indicates $25\%$ improvement in execution time. A value larger than $1$ consequently describes an increase of the execution time. Each point in the plots is also associated with the computed variance across all data sets and ensemble sizes.
\newcommand{\badcol}{red!80!black}
\newcommand{\goodcol}{green!60!black}
\begin{table}
    \centering
    \caption{Average (geometric mean) normalized execution time: ($D\geq20$): Average of ensembles with a maximal tree depth of more than 20}
    \begin{tabular}{l|rrrr}
    &\textbf{X86 S}&\textbf{X86 D}&\textbf{ARMv8 S}&\textbf{ARMv8 D}\\
    \hline
    \textbf{\kuanmdnameabbr}&\textcolor{\goodcol}{0.88$\times$}&\textcolor{\goodcol}{0.92$\times$}&\textcolor{\goodcol}{0.85$\times$}&\textcolor{\badcol}{1.14$\times$}\\
    \textbf{\kuanmdnameabbr~($D\geq20$)}&\textcolor{\goodcol}{0.83$\times$}&\textcolor{\goodcol}{0.87$\times$}&\textcolor{\goodcol}{0.79$\times$}&\textcolor{\badcol}{1.22$\times$}\\
    \textbf{FLInt}&\textcolor{\goodcol}{0.81$\times$}&\textcolor{\goodcol}{0.83$\times$}&\textcolor{\goodcol}{0.85$\times$}&\textcolor{\goodcol}{0.77$\times$}\\
    \textbf{FLInt ($D\geq20$)}&\textcolor{\goodcol}{0.79$\times$}&\textcolor{\goodcol}{0.83$\times$}&\textcolor{\goodcol}{0.84$\times$}&\textcolor{\goodcol}{0.74$\times$}\\
    \textbf{\kuanmdnameabbr~(\mdname)}&\textcolor{\goodcol}{0.71$\times$}&\textcolor{\goodcol}{0.76$\times$}&\textcolor{\goodcol}{0.72$\times$}&\textcolor{\goodcol}{0.70$\times$}\\
    \textbf{\kuanmdnameabbr~(\mdname)~($D\geq20$)}&\textcolor{\goodcol}{0.66$\times$}&\textcolor{\goodcol}{0.72$\times$}&\textcolor{\goodcol}{0.66$\times$}&\textcolor{\goodcol}{0.64$\times$}\\
    \end{tabular}
    \vspace{-.6cm}
    \label{tab:resultnumbers}
\end{table}
In addition to the graphical illustration, we also provide the average (geometric mean) normalized execution time in \Cref{tab:resultnumbers}. We compute the average over two sets: 1) all tree configurations for all benchmarks for one implementation and 2) all tree configurations where the maximal depth is limited to more than 20 for all benchmarks for one implementation.

\noindent\textbf{Results in general:} From the presented results, several observations can be made. First, it can be observed for almost all systems and configurations that the gained execution time improvement varies much for small trees and reaches a more constant value for deeper trees. For small trees, a normalized short time is spent for every feature vector for traversing the tree, which imposes a higher contribution of overheads (e.g. creating of data structures and function calls). For higher maximal depths of the trees, single trees do not reach the maximal depth at a certain point (when the data set requires no further splitting to gain accuracy), hence trees can have a similar shape for high maximal depths. Second, it can be observed that the \mdname~implementation improves the execution time for almost all evaluated cases for the standard tree, as well as for the \kuanmdnameabbr~implementation.

\noindent\textbf{Integration into \kuanmdnameabbr:} In order to asses the range of improvement in terms of execution time with other state-of-the-art optimization approaches for decision trees and evaluate how \mdname~can work together with such optimizations, we compare \mdname~to \kuanmdnameabbr~from Chen et~al.~\cite{chen2022efficient}.
For all systems, except the ARM 
server system, \mdname~on its own achieves a similar or larger improvement as \kuanmdnameabbr~does. For smaller trees, the improvement is even consequently larger.
Basically, \mdname s can be also integrated into \kuanmdnameabbr~directly. The approach however, explicitly considers instruction and data caches for the implementation. Since floating point constants are usually loaded from data memory, but encoded as immediate values in \mdname~and thus are loaded from instruction memory, the optimization algorithm has to be redesigned to properly work together with \mdname s. Ignoring this for a moment and investigating the results of the straightforward integration of \mdname~into \kuanmdnameabbr, it can be seen that the performance is improved significantly in almost all cases. Furthermore, the improvement seems to be almost constant over different sized trees for almost all systems. This suggests the conclusion that \mdname~is an orthogonal optimization to \kuanmdnameabbr~and optimizes another performance bottleneck, working well together with \kuanmdnameabbr. 

\noindent\textbf{Direct Assembly Implementation:} We motivate the direct assembly implementation of \mdname~by eliminating language related overheads for the reinterpretation of floating point values. Therefore, we also compare the direct assembly based implementation of \mdname~with the C-based implementation of standard trees. It should be noted that the assembly based implementation could also be combined with \kuanmdnameabbr~but requires rewriting of the entire \kuanmdnameabbr~algorithm to directly produce assembly code. Since this imposes methodological changes in the algorithm itself, which open another design space, it is out the scope of this work.
\begin{figure}[]
\centering
        \scalebox{.7}{
            \input{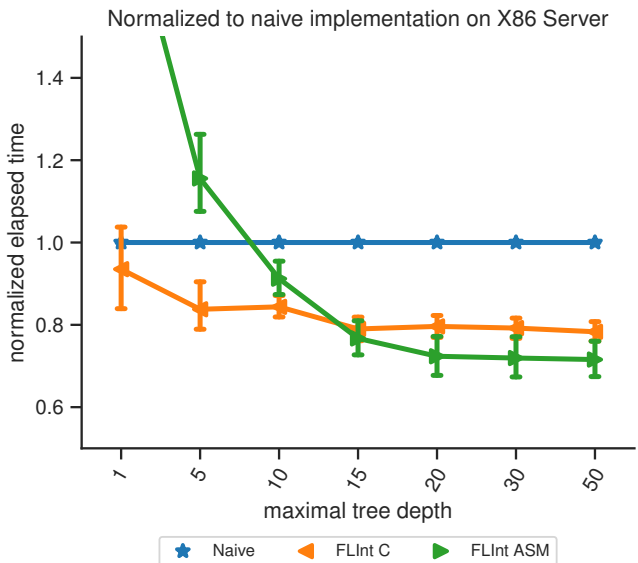}
        }
    \vspace{-.6cm}
    \caption{Normalized execution time for the assembly and C implementation}
    \vspace{-.2cm}
    \label{fig:results_asm}
\end{figure}
\Cref{fig:results_asm} highlights the normalized execution time for the direct assembly implementation (orange line with left arrow ticks) in relation to the C-based implementation (green line with right arrow ticks) for the X86 server system. It can be observed that although the assembly version performs worse for small tree sizes due to the missing compiler optimization, it can outperform the C-based implementation for larger trees. This is also consistent with the other tested systems (\Cref{tab:resultnumbersasm}).
\begin{table}[t]
    \centering
    \caption{Average normalized execution time for the assembly implementation:}
    \begin{tabular}{l|rrrr}
    &\textbf{X86 S}&\textbf{X86 D}&\textbf{ARMv8 S}&\textbf{ARMv8 D}\\
    \hline
    \textbf{FLInt ASM}&\textcolor{\goodcol}{0.89$\times$}&\textcolor{\goodcol}{0.95$\times$}&\textcolor{\goodcol}{0.83$\times$}&\textcolor{\goodcol}{0.89$\times$}\\
    \textbf{FLInt ASM ($D\geq20$)}&\textcolor{\goodcol}{0.70$\times$}&\textcolor{\goodcol}{0.75$\times$}&\textcolor{\goodcol}{0.69$\times$}&\textcolor{\goodcol}{0.72$\times$}\\
    \end{tabular}
    \label{tab:resultnumbersasm}
    \vspace{-.6cm}
\end{table}
This suggests the conclusion that the assembly implementation could gain higher performance improvements, when integrated for deep trees.

Overall, it can be observed that the integration of \mdname s into random forests can reduce the execution time in comparison to a naive implementation by up to $\approx30\%$. Integrating \mdname s into other existing optimization methods, even reduces the execution time by up to $\approx 35\%$. While it makes sense to 
utilize the C-based implementation for small trees, the assembly based implementation can achieve higher performance gains for deeper trees due to the explicit control over the value loading and interpretation. 

\vspace{-.1cm}
\section{Conclusion}
\label{sec_conc}
In this paper, we discuss the realization of floating point comparison by only using \tkname~and logic operations. We prove that the resulting \mdname~operator delivers correct results. We further efficiently integrate \mdname s into random forest implementations as if-else trees and resolve the handling of special cases already during the implementation time. With that, we provide an option to execute floating point based random forests without any use of hardware floating point units or software floats. This not only allows the execution of such models on devices without floating point support, but also improves the performance on other devices, by excluding overheads of the floating point operations.

In practice, we provide a generic C-based implementation and a specialized assembly-based implementation for X86 and ARMv8, publicly available: \url{https://github.com/tu-dortmund-ls12-rt/arch-forest/tree/flintcomparison}. Our evaluation on server and desktop class systems shows that our approach can improve the performance in almost all considered cases upon a naive realization. 
Overall, integrating \mdname~into random forests reduces the execution time by up to $\approx 30\%$ and even up to $\approx 35\%$ with additional cache aware optimization in our tested cases.

For future work, the interplay between \mdname~and \kuanmdnameabbr~can be improved. The assembly implementation can be integrated and the assumptions about available cache sizes can be adjusted.
In addition, \mdname s can be integrated into other applications, which heavily rely on floating point comparisons. In order to motivate the usage of \mdname, we plan to publish our code.
\vspace{-.2cm}

\section*{Acknowledgement}
\vspace{-.1cm}
This work has been supported by Deutsche Forschungsgemeinschaft (DFG) within the project OneMemory (project number 405422836), the SFB876 A1 (project number 124020371), and Deutscher Akademischer Austauschdienst (DAAD) within the Programme for Project-Related Personal Exchange (PPP) (project number 57559723).
\vspace{-.1cm}

\bibliographystyle{abbrv}
\bibliography{sources}

\end{document}